\newtheorem{thm}{Theorem}[section]
\newtheorem{prop}[thm]{Proposition}
\theoremstyle{definition}
\theoremstyle{remark}
\newtheorem{rem}[thm]{Remark}
\begin{document}

%
\title{A PCA-based Data Prediction Method
}
\titlerunning{A PCA-based Data Prediction Method}  
%
%

\authorrunning{Author}   
\author{
Peteris DAUGULIS\inst{1}, Vija VAGALE\inst{1}, Emiliano MANCINI\inst{2,3}, Filippo CASTIGLIONE\inst{4}
}
\authorrunning{Daugulis et al.}
%
%
\institute{Daugavpils University, Daugavpils, Latvia
\and
Data Science Institute, Hasselt University, Diepenbeek, Belgium
\and
Department of Global Health, Amsterdam UMC, Amsterdam, The Netherlands
\and
Institute for Computing Applications, Rome, Italy}
%

\dedication{0000-0003-3866-514X, 0000-0002-5428-6441, 0000-0002-5613-234X, 0000-0002-1442-3552}

\emails{peteris.daugulis@du.lv, vija.vagale@du.lv,emiliano.mancini@uhasselt.be,  filippo.castiglione@cnr.it }
%
%

\maketitle      

\begin{abstract}
The problem of choosing appropriate values for missing data is
often encountered in the data science. We describe a novel method
containing both traditional mathematics and machine learning
elements for prediction (imputation) of missing data. This method
is based on the notion of distance between shifted linear
subspaces representing the existing data and candidate sets. The
existing data set is represented by the subspace spanned by its
first principal components. Solutions for the case of the
Euclidean metric are given.
\end{abstract}
%
%

\section{Introduction}

\subsection{Outline}

In this article we describe a method of predicting unknown values
of variables which is based on PCA and metric (Euclidean or other)
in the ambient real linear space of variables -  \sl the
PCA-distance method\rm. Our motivation and goal is the problem of recovering 
missing data assuming that the set of complete data samples is
approximated by the hyperplane spanned by the first principal
components and the set of candidate points form another shifted
subspace. Regression is not used in this method. In the case of the Euclidean metric we give exact
solutions which use orthogonal projections and extrema of
quadratic functions. We prove all the included mathematical statements. The computation algorithm and the activity
diagram of the PCA-distance method are given. It is assumed that
arithmetical operations of various data entries are justified,
i.e. all data units are dimensionless. Steps of the method can be
interpreted in terms of machine learning.

%

\subsection{Background and previous work}\label{sec:nothing}

\paragraph{Prediction as a mathematical modelling problem.}

In most sciences and research areas it is necessary to generate
(predict, impute, estimate) missing information (data, relations,
rules etc.) if partial initial information is given. In this
article we use the term \sl prediction\rm\ to denote such methods.
 Such methods usually are based on minimizing errors and
finding extremal values of functions and discrete objects. Most
prediction methods start with building a suitable mathematical
object - a \sl model,\rm\ a discrete or continuous subset of an
ambient space, to represent the most important properties of the
given body of information (e.g. a discrete data set). Models may
mean representation in at least two senses - the evolution of a
system or a process, or the simplified description of an existing
system or a collection of data. Apart from a model of existing
data we must alse have a set of candidate values for predictions.

A range of general purpose methods which can also be used for
prediction purposes, such as approximation, interpolation,
extrapolation and others, have been developed 
\cite{Me}, (Mittal, \cite{M}). Machine learning approaches are
used (Bengio, Courville et al., \cite{BCV}), (Bzdok, Altman et al.,
\cite{BAK}).

The simplest prediction methods involve using mean values of
specified components of suitable data points, (Little and Rubin, 
\cite{LR}). There are prediction methods using least-square (linear
regression) ideas, (Bu, Dysvik et al., \cite{BDJ}). There are methods
assuming that the completely defined data samples belong to a
mutivariate distribution, (van Buuren, \cite{B}). In such methods
(Expectation-Maximization methods) parameters of the distribution
corresponding to the completely defined samples and missing values
are computed to maximize the likelyhood function. A popular
direction is based on the K-nearest neighbour (K-NN) idea which uses
a metric or a similarity measure in certain subspaces of the ambient
space, (Jonsson and Wohlin, \cite{JW}). In this approach missing
values are defined as means of corressponding values of nearest
completely defined data points. Nearness is defined using
Euclidean-like metrics in the subspace having dimensions where
values are defined for all data points.  See (Bertsimas, Pawlowski et
al., \cite{BPZ}).

\paragraph{Principal Components Analysis.}

An effective and widely used tool of data modelling and analysis is
the Principal Component Analysis (PCA), (Pearson, \cite{P}), (Eckart
and Young, \cite{EY}), (Hestenes, \cite{He}), (Hotelling, \cite{Ho1}),
(Hotelling, \cite{Ho2}). It is used to represent a discrete set of
data points as a shifted linear subspace which shows the most
important variables and their linear combinations. PCA is used for
linearization of data, dimensionality reduction, filtering out noise
and finding the most important linear combinations of data variables
(Meglen, \cite{Meg}), (Gorban, Kegl et al., \cite{GKWZ}).

PCA is a mathematical procedure that transforms the basis of the
space (i.e., a change of variable) which includes the set of data
we are interested in. It is mainly used to reduce the
dimensionality of a large data set. In fact, after the
transformation, the new coordinates of the basis (i.e., the new
variables) are ranked in terms of the ability to embed most of the
variability of the data set. Thus, focusing on few variables and
neglecting the others, one can keep most of the information
contained in the data set and focus on that.

In other words, to preserve as much variability as possible one
finds new variables that are linear functions of those in the
original dataset, that have the property of successively
maximizing the variance and, at the same time, are uncorrelated
with each other. The mathematical operations to perform a PCA are
based on the eigendecomposition of the data covariance matrix
(hence the principal components are eigenvectors of the data's
covariance matrix) also known as singular value decomposition of
the data matrix.

PCA is mainly a statistical tool developed by statisticians that has
found a large number of applications in many fields of science and
technology and that is currently used as a step to perform
predictions with machine learning methodologies. There is a data
prediction method - Bayesian Principal Component analysis, which
uses PCA (Oba, Sato et al., \cite{OS}).

\section{Main results}

In this section we describe the minimal distance idea and prove the mathematical results for the Euclidean case.

\subsection{The minimal distance idea}

\paragraph{Notations and basic facts.}\label{sec:nothing2}

Given two subsets $\mathcal{A},\mathcal{B}$ of a metric space
$(\mathbb{M},d(\cdot,\cdot))$ we denote by
$d(\mathcal{A},\mathcal{B})$ the distance between $\mathcal{A}$ and
$\mathcal{B}$: $d(\mathcal{A},\mathcal{B})=\inf\limits_{a\in
\mathcal{A},b\in \mathcal{B}}d(a,b)$. If $\mathcal{U}, \mathcal{V}$
are shifted linear subspaces in a real Euclidean (inner-product)
space $E$ then $d(\mathcal{U},\mathcal{V})=\min\limits_{u\in
\mathcal{U},v\in \mathcal{V}}d(u,v)$, a nonnegative real number. We
consider $\mathbb{R}^{n}$ as the Euclidean space with the norm $||x
||=\sqrt{x^{T}x}$ and the metric $d(x,y)=||x-y||$, for consistency
its elements are defined as columns. We denote by
$proj_{\mathcal{V}}$ the orthogonal projection onto $\mathcal{V}\le
\mathbb{R}^{m}$. We denote the subspace spanned by the columns of a
matrix $V$ by $\mathcal{V}$ (using $\backslash mathcal$ letters) or
$\langle V \rangle$.

\paragraph{Predicting one variable.}

First we describe the problem we are trying to solve
in the case of predicting one variable. Suppose we have a system
described by $m-1$ independent variables (indicators)
$x_{1},x_{2},...,x_{m-1}$ and one dependent variable $y$. We consider
$\mathbb{R}^{m}$ with some metric, for example, the Euclidean
metric.

Suppose we have a set of complete measurements
$\mathcal{S}=\{(x_{11},...,x_{1,m-1},y_{1}),...,$ $...,(x_{s1},...x_{s,m-1},y_{m})\}$, $|\mathcal{S}|=s$. We also have an incomplete measurement - a sequence of values $X_{0}=(x_{01},...,x_{0,m-1})$ for which we want to
find a $y$-value which would be most appropriate in a rigorous
sense. It means finding a point on the line
$\mathcal{L}=\{x_{1}=x_{01},...,x_{m-1}=x_{0,m-1}\}$ (\sl the prediction
line \rm) which is special with respect to $\mathcal{S}$.

In order to extract the most important property of  $\mathcal{S}$ we
choose "the first term of approximation" - the linear approximation,
which is sufficient for most meaningful predictions dealing with
missing data coming from various sources with possibly different
standards and formats. For this purpose we can use PCA. Arrange the
coordinates of $\mathcal{S}$-elements as a matrix $S=\left[
                               \begin{array}{c|c|c|c}
                                 x_{11} & ... & x_{1,m-1} & y_{1}\\
                                 \hline
                                 ... & ... & ... & ...\\
                                 \hline
                                 x_{s1} & ... & x_{s,m-1} & y_{m}\\
                               \end{array}
                             \right]
$. Choose $n$ (first) principal components (PC) of $S$. Construct
the subspace $\mathcal{P}$ (\sl shifted principal subspace\rm)
spanned by these PC. It is a shifted linear subspace of dimension
$n$ in $\mathbb{R}^{m}$. By construction $\mathcal{P}$ is spanned
by a basis (the principal components) which diagonalizes the
covariance matrix of data with maximal diagonal elements, in every
dimension there is the shift by the column average.

We have that $\dim(\mathcal{P})=r$, $\dim(\mathcal{L})=1$. Suppose
that $\mathcal{L}\not\subseteq \mathcal{P}$. If $\mathcal{P}$ and
$\mathcal{L}$ intersect (in one point $(x_{01},...,x_{0,m-1},y_{0})$)
then take the intersection point as the prediction - the predicted
$y$-value of the sequence $X_{0}$ is $y_{0}$. This  See Fig.1 for
the case $n=3, r=2$. We note that this case is essentially the least
square prediction.

\begin{center}

\includegraphics[height=45mm,width=60mm]{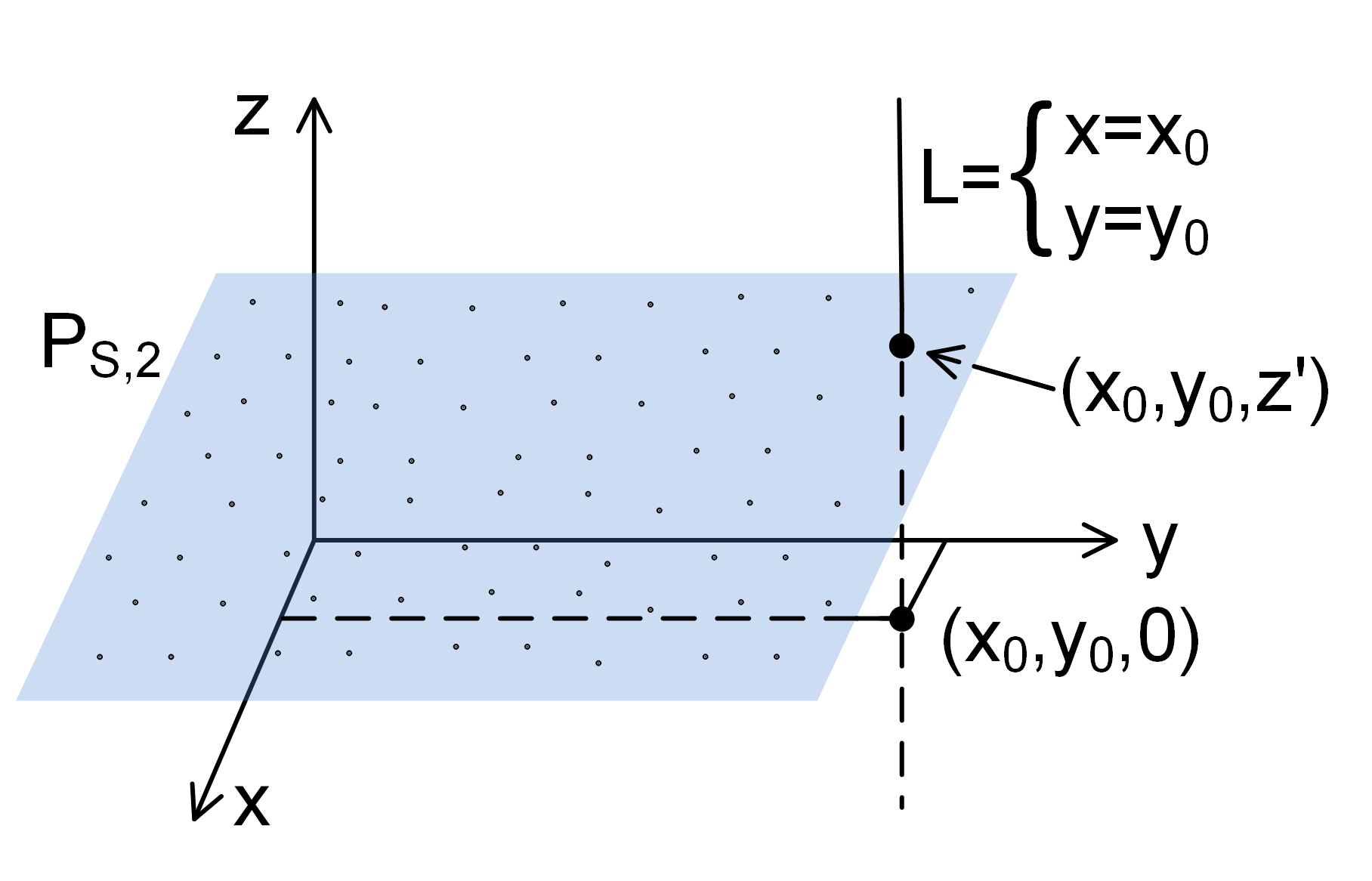}

Fig.1.  The case $r=2$ in $\mathbb{R}^{3}$.
\end{center}

Consider the case when $\mathcal{P}$ and $\mathcal{L}$ do not
intersect. Our proposal is to choose a point in $\mathcal{L}$
minimizing the distance to $\mathcal{P}$ as our prediction. At least
one such point exists. We want to find $l_{0}\in \mathcal{L}$ for
which there is $p_{0}\in \mathcal{P}$ such that
$d(l_{0},p_{0})=\min\limits_{l\in \mathcal{L},p\in
\mathcal{P}}d(l,p)=d(\mathcal{L},\mathcal{P})$, $l_{0}$ gives the
desired "predicted" $y$-value.

\begin{center}

\includegraphics[height=45mm,width=60mm]{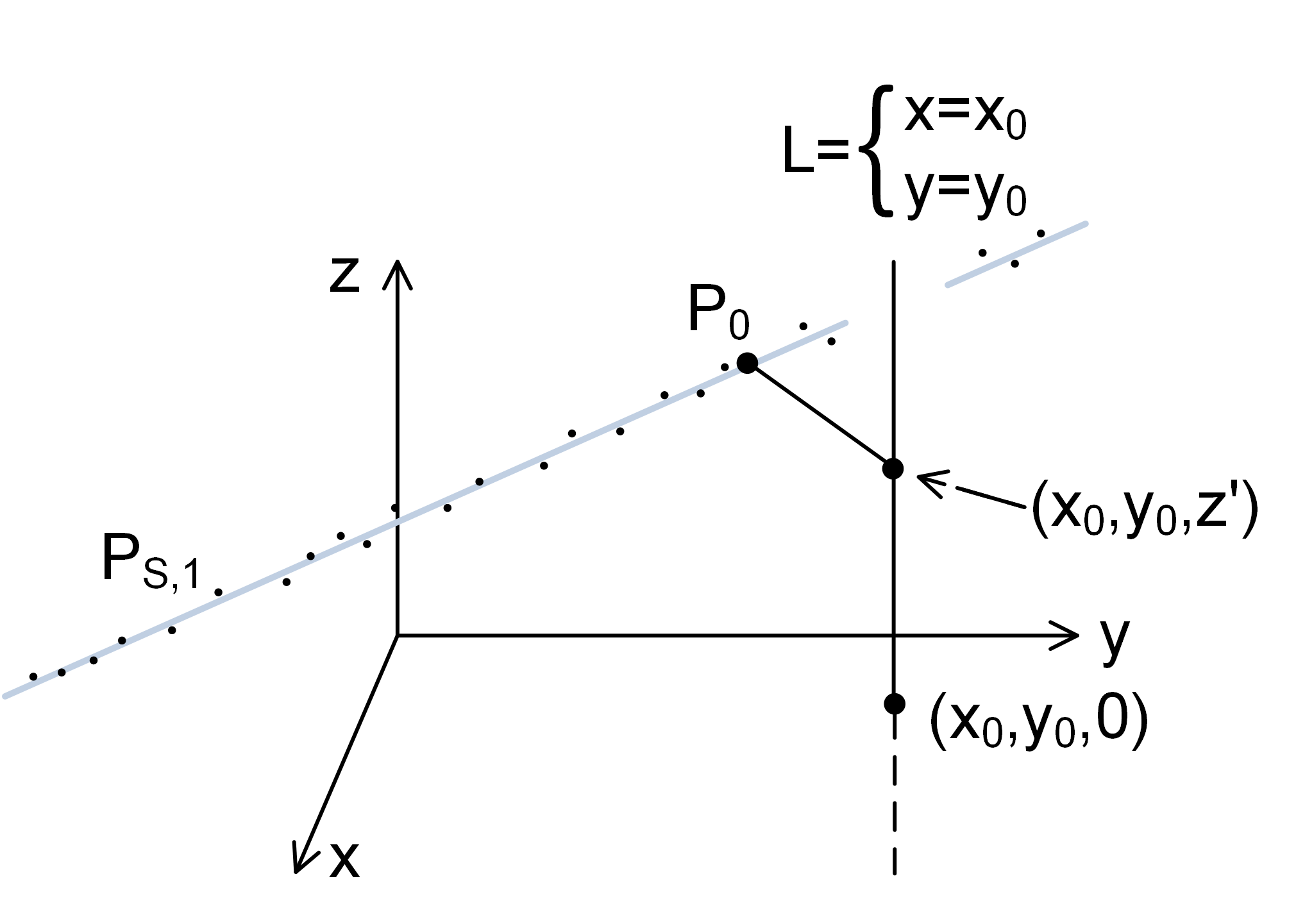}

Fig.2.  The case $r=1$ in $\mathbb{R}^{3}$.
\end{center}

If we use the Euclidean distance then
$p_{0}=proj_{\mathcal{P}}(l_{0})$.

\paragraph{Predicting more than one variable.}

We can have a situation where more than one entry of data points are
missing - we need to predict more than one variable for each
measured data sample. We work in $\mathbb{R}^{n}$ and have a
sequence $X_{0}=(x_{01},...,x_{0k})$, $k<n-1$, for which we want to
find the missing $n-k$ values. In this case the shifted linear
subspace $\mathcal{L}=\{x_{1}=x_{01},...,x_{k}=x_{0k}\}$ (\sl the
prediction space\rm ) has dimension $k$. Again we can use the
minimal distance idea: find a $l_{0}\in \mathcal{L}$ for which the
minimal distance to $\mathcal{P}$ is achieved.

We note that in all cases $\mathcal{L}\cap \mathcal{P}\ne \emptyset$
is equivalent to $d(\mathcal{L},\mathcal{P})=0$.


Prediction by minimizing distances can also be generalized for the
cases when data or candidate data models are nonlinear varieties in
ambient spaces.

\subsection{Solutions for Euclidean spaces}

\paragraph{One dimensional prediction space - prediction
line.}\label{14}

In this section we describe exact solutions for a prediction line
$\mathcal{L}$, $\dim(\mathcal{L})=1$, and an arbitrary principal
subspace $\mathcal{P}$ in case of the Euclidean metric. These
solutions are based on orthogonal projections and extrema of
quadratic functions.

The first preposition deals with the case of a linearly independent
generating set of the subspace $\mathcal{P}$ with respect to which
we find the special point on the prediction line $\mathcal{L}$.

\begin{prop}\label{1} Let $p_{1},...,p_{n}$ be linearly independent elements in $\mathbb{R}^{m}$
, $P=[p_{1}|...|p_{n}]$ is the $m\times n$ matrix obtained by
joining $p_{1},..,p_{n}$. Denote

\begin{equation}\label{9}
W=P(P^{T}P)^{-1}P^{T}-E_{m}=[w_{1}|W'],
\end{equation}
 where $w_{1}$ is the first
column of $W$. Let $L=\{\left[
  \begin{array}{c}
    t \\
    \hline
    l' \\
  \end{array}
\right]|t\in \mathbb{R}\}$, $l'\in \mathbb{R}^{m-1}$ fixed, an
affine line in $\mathbb{R}^{m}$. Let $\mathcal{P}=\langle
p_{1},...,p_{n}\rangle\le \mathbb{R}^{m}$.

\begin{enumerate}

\item If $w_{1}=0$ then for any $l\in \mathcal{L}$ there is a point $p\in \mathcal{P}$
such that $d(l,p)=d(\mathcal{L},\mathcal{P})$.

\item If $w_{1}\neq 0$ then
$d(l_{pred},p_{0})=d(\mathcal{L},\mathcal{P})=\min\limits_{l\in
\mathcal{L},p\in \mathcal{P} }d(l,p)$ for $l_{pred}\in \mathcal{L}$
and $p_{0}\in \mathcal{P}$ if and only if

$l_{pred}=\left[
  \begin{array}{c}
    t_{pred} \\
    \hline
    l' \\
  \end{array}
\right]\in \mathbb{R}^{n}$ where

\begin{equation}\label{88}
t_{pred}=-\cfrac{1}{||w_{1}||^2}w^{T}_{1}W'l'.
\end{equation}

\end{enumerate}

\end{prop}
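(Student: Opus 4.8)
The plan is to reduce the two-subspace distance $d(\mathcal{L},\mathcal{P})$ to a one-variable optimization over the parameter $t$ of the line $L$, using the fact (stated in the excerpt) that for the Euclidean metric the optimal point of $\mathcal{P}$ for a given $l\in\mathcal{L}$ is $\mathrm{proj}_{\mathcal{P}}(l)$. First I would note that, since $p_1,\dots,p_n$ are linearly independent, $P^TP$ is invertible and the orthogonal projection onto $\mathcal{P}$ is given by the matrix $\Pi=P(P^TP)^{-1}P^T$, so that $d(l,\mathcal{P})^2=\|l-\Pi l\|^2=\|(\Pi-E_m)l\|^2=\|Wl\|^2$ with $W$ as in \eqref{9}. (Here one should be slightly careful that the paper writes $\mathcal{P}$ for the linear span; the projection formula is the standard one because the span passes through the origin.) Hence $d(\mathcal{L},\mathcal{P})^2=\min_{t\in\mathbb{R}}\|Wl\|^2$ where $l=\bigl[\begin{smallmatrix}t\\ l'\end{smallmatrix}\bigr]$.

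Next I would expand $Wl$ using the block decomposition $W=[w_1\mid W']$, giving $Wl = t\,w_1 + W'l'$, so that
\begin{equation}
f(t):=\|Wl\|^2 = t^2\|w_1\|^2 + 2t\, w_1^T W' l' + \|W'l'\|^2 .
\end{equation}
This is a quadratic in $t$. In case (1), $w_1=0$, so $f(t)=\|W'l'\|^2$ is constant in $t$; therefore every $l\in\mathcal{L}$ already realizes the minimum, and taking $p=\Pi l$ gives a point of $\mathcal{P}$ with $d(l,p)=d(\mathcal{L},\mathcal{P})$, which is exactly statement (1). In case (2), $w_1\neq 0$ means the leading coefficient $\|w_1\|^2>0$, so $f$ is a strictly convex parabola with a unique minimizer obtained by setting $f'(t)=0$: $2t\|w_1\|^2 + 2w_1^TW'l'=0$, i.e. $t_{pred}=-\frac{1}{\|w_1\|^2}w_1^TW'l'$, which is \eqref{88}. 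Strict convexity gives both existence and uniqueness, matching the "if and only if" in the statement; the corresponding $p_0=\Pi l_{pred}$ supplies the point in $\mathcal{P}$.

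The only real subtlety — and what I'd treat as the main obstacle — is justifying the interchange "distance between the two subspaces $=$ minimize over $l$ of (distance from $l$ to $\mathcal{P}$)" and making sure the $\min$ (not merely $\inf$) is attained, i.e. that restricting the inner optimization over $p\in\mathcal{P}$ to $p=\mathrm{proj}_{\mathcal{P}}(l)$ loses nothing. This is handled by the elementary fact that for a fixed point $l$ and a (closed) linear subspace $\mathcal{P}$, $d(l,\mathcal{P})=\|l-\mathrm{proj}_{\mathcal{P}}(l)\|$ with the projection being the unique closest point; the excerpt has already recorded this ($p_0=\mathrm{proj}_{\mathcal{P}}(l_0)$), so in the write-up I would simply cite it and then carry out the quadratic minimization above. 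Everything else is routine algebra with the projection matrix.
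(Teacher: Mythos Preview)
Your proposal is correct and follows essentially the same approach as the paper: reduce $d(\mathcal{L},\mathcal{P})$ to $\min_t \|Wl\|^2$ via the projection formula, write $Wl=tw_1+W'l'$, and split into the cases $w_1=0$ and $w_1\neq 0$. The only cosmetic difference is that in case~(2) the paper obtains the minimizer by the Pythagorean decomposition $\|Wl\|^2=\|tw_1+\mathrm{proj}_{\langle w_1\rangle}(W'l')\|^2+\|\mathrm{proj}_{\langle w_1\rangle^{\perp}}(W'l')\|^2$ rather than by expanding the quadratic and differentiating, but this leads to the same formula in one line and the paper itself records your expansion as an alternative argument in the proof of Proposition~\ref{4}.
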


\begin{proof} Let $l=
\left[
  \begin{array}{c}
    t \\
    \hline
    l' \\
  \end{array}
\right]\in \mathcal{L}$. $P^{T}P$ is invertible since columns of $P$
are linearly independent.   It is known that
$proj_{\mathcal{P}}=P(P^{T}P)^{-1}P^{T}$, (Meyer, \cite{Mey}).
Furthermore,
$d(l,\mathcal{P})=||proj_{\mathcal{P}}(l)-l||=||(P(P^{T}P)^{-1}P^{T}-E_{m})l||=|||Wl||$.

We express $Wl$ as the linear combination of $W$-columns:

\begin{equation}\label{103}
Wl=tw_{1}+W'l'.
\end{equation}

\begin{enumerate}

\item Let $w_{1}=0$. Then for any $l\in \mathcal{L}$ $d(l,\mathcal{P})=||W'l'||$, it does
not depend on $l$,
$d(l,proj_{\mathcal{P}}(l))=d(\mathcal{L},\mathcal{P})$.

\item Let $w_{1}\neq 0$.
We use the fact $||x||^2=||proj_{V}(x)||^2+||proj_{V^{\perp}}(x)||^2$ (the generalized Pythagorean theorem). Taking $x=Wl$ and $V=\langle w_{1}\rangle$ we get

\begin{multline}\label{104}
||Wl||^2=||proj_{\langle w_{1}\rangle}(Wl)||^{2}+||proj_{\langle
w_{1}\rangle^{\perp}}(Wl)||^2=\\
=||tw_{1}+proj_{\langle w_{1}
\rangle}(W'l')||^2+||proj_{\langle w_{1}
\rangle^{\perp}}(W'l')||^2.
\end{multline}

$l$ such that $||Wl||$ is minimal will be achieved for the unique
$t$ satisfying $tw_{1}=-proj_{\langle w_{1}\rangle}(W'l')$, hence

\begin{equation}\label{104}
t_{pred}=-\cfrac{1}{||w_{1}||^2}w^{T}_{1}W'l'.
\end{equation}

\end{enumerate}
\end{proof}

\begin{rem} Note that in any case $d(l,p)=d(l,\mathcal{P})$, $l\in \mathcal{L}$,
$p\in \mathcal{P}$, if and only if $p=proj_{\mathcal{P}}(l)$.
\end{rem}

\begin{rem} If $(p_{1},...,p_{n})$ is an (ordered) orthonormal basis of $\mathcal{P}$
then $P^{T}P=E_{n}$ therefore $proj_{\mathcal{P}}=W=PP^{T}$.
Ortonormality of $(p_{1},...,p_{n})$ takes place if
$p_{1},...,p_{n}$ are principal components.

\end{rem}

\begin{rem} If $(p_{1},...,p_{n})$ is a (not necessarily orthonormal, ordered) basis of $\mathcal{P}$
then it may be computationally more efficient to compute
$P(P^{T}P)^{-1}P$ via the $QR$ factorization of $P$. See \cite{Mey}.
The $QR$ factorization is suitable for matrices with large condition
number, it can be made computationally stable using Householder or
Givens reductions. If $P=QR$ where columns of $Q$ are orthonormal
and $R$ is a triangular matrix with positive diagonal entries then
$P(P^{T}P)^{-1}P^{T}=QQ^{T}$.

\end{rem}

The next proposition deals with the case when the generators of the
subspace $\mathcal{P}$ are not linearly independent.

\begin{prop}\label{2} Let $p_{1},...,p_{n}$ be elements of $\mathbb{R}^{m}$
, $P=[p_{1}|...|p_{n}]$ is the $m\times n$ matrix obtained by
joining $p_{1},..,p_{n}$. Let $PC=\left[
                                                       \begin{array}{c|c}
                                                        P_{e}  & O_{m,n-r} \\
                                                       \end{array}
                                                     \right]$
be such that $rank(P_{e})=rank(P)$ (for example, a column echelon
form of $P$), $C$ is a $n\times n$ matrix of elementary column
operations.

Denote

\begin{equation}\label{13} W=P_{e}(P_{e}^{T}P_{e})^{-1}P_{e}^{T}-E_{m}=[w_{1}|W'],
\end{equation}

\noindent where $w_{1}$ is the first column of $W$. In these
notations the statements of Proposition \ref{1} are true.

\end{prop}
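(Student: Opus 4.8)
The plan is to reduce Proposition~\ref{2} to Proposition~\ref{1} by replacing the possibly linearly dependent family $p_{1},\dots,p_{n}$ with the linearly independent columns of $P_{e}$, after verifying that neither the subspace $\mathcal{P}$, nor the prediction line $\mathcal{L}$, nor the matrix $W$ is affected by this substitution.

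First I would record that $C$ is invertible, being a product of elementary column operations, so right multiplication by $C$ preserves the column space: $\langle PC\rangle=\langle P\rangle=\mathcal{P}$. Since $PC=[P_{e}\mid O_{m,n-r}]$, adjoining the zero block changes nothing, hence $\langle P_{e}\rangle=\langle PC\rangle=\mathcal{P}$. Moreover $P_{e}$ is an $m\times r$ matrix with $\operatorname{rank}(P_{e})=\operatorname{rank}(P)=r=\dim\mathcal{P}$, so its $r$ columns are linearly independent and $P_{e}^{T}P_{e}$ is invertible. By the standard fact quoted in the proof of Proposition~\ref{1}, $proj_{\mathcal{P}}=P_{e}(P_{e}^{T}P_{e})^{-1}P_{e}^{T}$, and therefore the matrix $W$ of~\eqref{13} coincides with $proj_{\mathcal{P}}-E_{m}$. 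In particular $W$ depends only on $\mathcal{P}$, not on the particular reduction chosen, so it is literally the matrix~\eqref{9} that Proposition~\ref{1} associates to the linearly independent columns of $P_{e}$.

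Then I would invoke Proposition~\ref{1} with $P_{e}$ in the role of $P$ and $r$ in the role of $n$. The prediction line $\mathcal{L}=\{x_{1}=x_{01},\dots,x_{m-1}=x_{0,m-1}\}$ and the ambient Euclidean structure are defined without any reference to the generators, so they are unchanged; consequently the dichotomy on $w_{1}$ and the formula $t_{pred}=-\frac{1}{||w_{1}||^{2}}w_{1}^{T}W'l'$ of~\eqref{88} carry over verbatim, which is exactly the assertion of Proposition~\ref{2}.

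I do not anticipate a genuine obstacle: the whole content is the two invariance observations — the column space is unchanged under right multiplication by an invertible matrix and under deletion of zero columns, and the orthogonal projection onto $\mathcal{P}$ is intrinsic to $\mathcal{P}$. The only point demanding a little care is the bookkeeping around dimensions, namely checking that $P_{e}$ has exactly $r=\operatorname{rank}(P)$ columns so that full rank yields linear \emph{independence} and not merely spanning, together with an explicit remark that $\mathcal{L}$ plays no part in the reduction, so that ``the statements of Proposition~\ref{1} are true'' may be read off directly rather than reproved.
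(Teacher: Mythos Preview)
Your proposal is correct and matches the paper's own argument: the paper simply observes that the columns of $P_{e}$ form a basis of $\mathcal{P}$, so $proj_{\mathcal{P}}=P_{e}(P_{e}^{T}P_{e})^{-1}P_{e}^{T}$, and then repeats the proof of Proposition~\ref{1} with $P_{e}$ in place of $P$. Your additional remarks (invertibility of $C$, invariance of $W$, and that $\mathcal{L}$ is untouched) just make explicit the bookkeeping the paper leaves implicit.
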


\begin{proof} Columns of $P_{e}$ form a basis
for $\mathcal{P}$. Denote the columns of $P_{e}$ by
$p'_{1},...,p'_{r}$, $\mathcal{P}_{e}=\langle
p'_{1},...,p'_{r}\rangle$. Then
$proj_{\mathcal{P}}=proj_{\mathcal{P}_{e}}=P_{e}(P_{e}^{T}P_{e})^{-1}P_{e}^{T}$.
Repeat the proof of Proposition \ref{1} substituting $P$ by $P_{e}$.

\end{proof}

The final proposition of this section gives another interpretation
and the same solution of the problem using the fact that the square
of the distance between a point on a line and a subspace is
quadratic function of a line parameter.

\begin{prop}\label{4} Let $p_{1},...,p_{n}$ be elements in $\mathbb{R}^{m}$, $\mathcal{P}=\langle p_{1},...,p_{n}\rangle\le \mathbb{R}^{m}$. Let
$\mathcal{L}=\{t\in \mathbb{R}\Big|\left[
  \begin{array}{c}
    t \\
    \hline
    l' \\
  \end{array}
\right]\in \mathbb{R}^{m}\}$, $l'\in \mathbb{R}^{m-1}$ fixed, an
affine line in $\mathbb{R}^{m}$.  Let $l_{i}=\left[
  \begin{array}{c}
    t_{i} \\
    \hline
    l' \\
  \end{array}
\right]\in L$, $i\in \{1,2,3\}$, $t_i$ distinct. Let
$proj_{\mathcal{P}}(x)=Wx$, $x\in \mathbb{R}^{m}$, with the first
column of $W$ being nonzero. Let
$||proj_{\mathcal{P}}(l_{i})-l_{i}||^2=d_{i}$. Then
$d(l_{pred},p_{0})=d(\mathcal{L},\mathcal{P})$ for $l_{pred}\in
\mathcal{L}$ and $p_{0}\in \mathcal{P}$ iff $l_{pred}=\left[
  \begin{array}{c}
    t_{pred} \\
    \hline
    l' \\
  \end{array}
\right]$ where $t_{pred}=-\cfrac{a_{1}}{2a_{2}}$ and
$[a_{0},a_{1},a_{2}]^{T}$ is the solution of the linear system

\begin{equation}\label{5}
 \left[
  \begin{array}{ccc}
    1 & t_{1} & t_{1}^{2} \\
    1 & t_{2} & t_{2}^{2} \\
    1 & t_{3} & t_{3}^2 \\
  \end{array}
\right]\cdot \left[
               \begin{array}{c}
                 a_{0} \\
                 a_{1} \\
                 a_{2} \\
               \end{array}
             \right]=
             \left[
               \begin{array}{c}
                 d_{1} \\
                 d_{2} \\
                 d_{3} \\
               \end{array}
             \right]
 \end{equation}

\end{prop}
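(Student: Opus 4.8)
The plan is to show that $f(t) := \|\mathrm{proj}_{\mathcal{P}}(l_t) - l_t\|^2$, where $l_t = \left[\begin{array}{c} t \\ \hline l' \end{array}\right]$, is a quadratic polynomial in $t$ with positive leading coefficient, and then to argue that the Vandermonde interpolation through the three data points $(t_i, d_i)$ recovers exactly this polynomial, so that its vertex gives the minimizing parameter. First I would write $\mathrm{proj}_{\mathcal{P}}(l_t) - l_t = W l_t = t w_1 + W' l'$, exactly as in equation~(\ref{103}) of Proposition~\ref{1}. Expanding the squared norm gives
\[
f(t) = \|t w_1 + W' l'\|^2 = \|w_1\|^2 t^2 + 2 (w_1^T W' l')\, t + \|W' l'\|^2,
\]
so $f(t) = a_2 t^2 + a_1 t + a_0$ with $a_2 = \|w_1\|^2 > 0$ (using the hypothesis that the first column of $W$ is nonzero), $a_1 = 2 w_1^T W' l'$, and $a_0 = \|W' l'\|^2$. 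By definition $d_i = f(t_i)$ for $i \in \{1,2,3\}$, which is precisely the linear system~(\ref{5}); since the $t_i$ are distinct the Vandermonde matrix is invertible, so the system has the unique solution $[a_0, a_1, a_2]^T$ just computed, and in particular $a_2 \neq 0$.

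Next I would minimize the quadratic. Since $a_2 > 0$, $f$ attains its unique global minimum at $t_{pred} = -a_1/(2a_2)$. Substituting the values of $a_1$ and $a_2$ gives $t_{pred} = -\frac{1}{\|w_1\|^2} w_1^T W' l'$, which agrees with the formula~(\ref{88}) from Proposition~\ref{1}; this consistency check confirms the two characterizations describe the same point. It remains to translate "minimizes $f(t)$" into "$d(l_{pred}, p_0) = d(\mathcal{L}, \mathcal{P})$". For this I would invoke the remark following Proposition~\ref{1}: for any $l \in \mathcal{L}$ and $p \in \mathcal{P}$ one has $d(l,p) = d(l, \mathcal{P})$ iff $p = \mathrm{proj}_{\mathcal{P}}(l)$, and $d(l,\mathcal{P})^2 = f(t)$ when $l = l_t$. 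Hence $\min_{l \in \mathcal{L}, p \in \mathcal{P}} d(l,p)^2 = \min_t f(t) = f(t_{pred})$, and this minimum is realized exactly by the pair $l_{pred} = l_{t_{pred}}$, $p_0 = \mathrm{proj}_{\mathcal{P}}(l_{pred})$, and by no other pair — giving the "iff".

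The only genuinely delicate point is the justification that $W$ as used here (defined implicitly by $\mathrm{proj}_{\mathcal{P}}(x) = Wx$) has the same structure and the same nonzero-first-column behavior as the $W$ of Propositions~\ref{1} and~\ref{2}; but this is immediate, since the orthogonal projection onto $\mathcal{P}$ is a well-defined linear map independent of which spanning set or basis of $\mathcal{P}$ one uses, so $W$ here coincides with $P_e(P_e^T P_e)^{-1} P_e^T - E_m$ from Proposition~\ref{2}, and the hypothesis "first column of $W$ nonzero" is exactly the hypothesis $w_1 \neq 0$ there. Everything else is the routine expansion of a squared norm and the elementary fact that a positive-definite quadratic has its minimum at $-a_1/(2a_2)$; I would keep those computations terse.
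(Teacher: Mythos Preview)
Your proposal is correct and follows exactly the paper's own approach: observe that $\|proj_{\mathcal{P}}(l)-l\|^{2}$ is a quadratic in $t$, recover its coefficients via the $3\times 3$ Vandermonde system, and locate the minimizer at the vertex $-a_{1}/(2a_{2})$; the paper likewise supplies the explicit expansion $\|Wl\|^{2}=\|w_{1}\|^{2}t^{2}+2w_{1}^{T}W'l'\,t+\|W'l'\|^{2}$ that you compute. You are more careful than the paper about Vandermonde invertibility, the strict positivity of $a_{2}$, and the ``iff'' clause, but the underlying argument is identical (and you have correctly spotted---and resolved---the slight notational drift between the statement's $W=proj_{\mathcal{P}}$ and Proposition~\ref{1}'s $W=proj_{\mathcal{P}}-E_{m}$, which the paper's own proof silently elides).
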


\begin{proof} If $l=
\left[
  \begin{array}{c}
    t \\
    \hline
    l' \\
  \end{array}
\right] $ then $||proj_{\mathcal{P}}(l)-l||^2$ is a nonconstant
nonnegative quadratic function $a_{0}+a_{1}t+a_{2}t^2$ of $t$. Its
coefficients can be determined by considering it values at $3$
values of $t$, say, $t_{1}, t_{2}, t_{3}$, coefficients are
solutions of (\ref{5}). The minimum of
$||proj_{\mathcal{P}}(l)-l||^2$ is achieved when
$t=-\cfrac{a_{1}}{2a_{2}}$.

Alternatively, $Wl=tw_{1}+W'l'$, therefore

\begin{equation}\label{105}
||Wl||^2=(Wl)^{T}Wl=||w_{1}||^2t^2+2w^{T}_{1}W'l'\cdot
t+||W'l'||^2.
\end{equation}

The minimum of $||Wl||^2$ as a function of $t$ is
achieved when $t=-\cfrac{1}{||w_{1}||^2}w^{T}_{1}W'l'$.
\end{proof}

\begin{rem}
If we use a general inner product $(x,y)=x^{T}My$ inducing the norm
$||x||_{M}=\sqrt{x^{T}Mx}$ where $M$ is a symmetric positive
definite matrix then (\ref{88}) has to be substituted by another
formula

\begin{equation}\label{99}
t_{pred}=-\cfrac{1}{||w_{1}||_{M}^2}w^{T}_{1}MW'l'.
\end{equation}

\end{rem}

\begin{rem}
Other norms such as $||\cdot ||_{p}$, $1\le p$, or $||\cdot
||_{\infty}$ can be considered in a similar way. The distance from
$l\in \mathcal{L}$ to $\mathcal{P}$ can be found using the unit
circle of the norm.
\end{rem}

\paragraph{Multidimensional prediction space.}

This section contains results when $\dim(\mathcal{L})>1$. The first
proposition gives a general solution if $\mathcal{P}$ is spanned by
linearly independent generators.

\begin{prop}\label{6} Let $p_{1},...,p_{n}$ be linearly independent elements in $\mathbb{R}^{m}$
, $P=[p_{1}|...|p_{n}]$ the $m\times n$ matrix obtained by joining
$p_{1},..,p_{n}$. Let $\mathcal{P}=\langle p_{1},...,p_{n}\rangle\le
\mathbb{R}^{m}$. Let

\begin{equation}\label{10}
W=P(P^{T}P)^{-1}P^{T}-E_{m}=[W_{k}|W'],
\end{equation}
 where $W_{k}$ is the block
of the first $k$ columns of $W$. Let $\mathcal{L}=\{\left[
  \begin{array}{c}
    t \\
    \hline
    l' \\
  \end{array}
\right]\in \mathbb{R}^{m}\}$, $t= \left[
  \begin{array}{c}
    t_{1} \\
    ... \\
    t_{k} \\
  \end{array}
\right] \in \mathbb{R}^{k}$, $l'\in \mathbb{R}^{m-k}$ fixed, an
affine $k$-dimensional subspace in $\mathbb{R}^{m}$.

\begin{enumerate}

\item If $W_{k}=0$ then for any $l\in \mathcal{L}$ there is a point $p\in \mathcal{P}$
such that $d(l,p)=d(\mathcal{L},\mathcal{P})$.

\item If $W_{k}\neq 0$ then
$d(l_{pred},p_{0})=\min\limits_{l\in \mathcal{L},p\in
\mathcal{P}}d(l,p)$ for $l_{pred}\in \mathcal{L}$ and $p_{0}\in
\mathcal{P}$ if and only if $l_{pred}=\left[
  \begin{array}{c}
    t_{pred} \\
    \hline
    l' \\
  \end{array}
\right]$ where $t_{pred}\in \mathbb{R}^{k}$ is such that
\begin{equation}\label{11}
W_{k}t_{pred}=-proj_{\langle w_{r}\rangle}(W'l').
\end{equation}
If $rank(W_{k})=k$ then
\begin{equation}\label{12}
t_{pred}=-W_{k,L}W_{k}(W^{T}_{k}W_{k})^{-1}W^{T}_{k}W'v
\end{equation}
 where
$W_{k,L}$ is a left-inverse of $W_{k}$ ($W_{k,L}W_{k}=E_{k}$).
\end{enumerate}

\end{prop}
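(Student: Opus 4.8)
The plan is to mirror the proof of Proposition~\ref{1}, replacing the scalar parameter $t$ by the vector $t\in\mathbb{R}^{k}$ and the one-dimensional subspace $\langle w_{1}\rangle$ by the column space $\langle W_{k}\rangle$. Write $l=\left[\begin{smallmatrix} t\\ \hline l'\end{smallmatrix}\right]\in\mathcal{L}$, so that as before $d(l,\mathcal{P})=\|(P(P^{T}P)^{-1}P^{T}-E_{m})l\|=\|Wl\|$, and decompose $Wl=W_{k}t+W'l'$. This reduces the problem to: over all $t\in\mathbb{R}^{k}$, minimize $\|W_{k}t+W'l'\|^{2}$. The vectors of the form $W_{k}t$ sweep out exactly the subspace $\langle W_{k}\rangle$, so the minimum of $\|v+W'l'\|^{2}$ over $v\in\langle W_{k}\rangle$ is attained precisely when $v=-\,proj_{\langle W_{k}\rangle}(W'l')$, by the standard orthogonal-projection characterization (the generalized Pythagorean theorem, as in the proof of Proposition~\ref{1}). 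This gives the defining equation $W_{k}t_{pred}=-\,proj_{\langle W_{k}\rangle}(W'l')$, i.e. equation~(\ref{11}). (I would flag here that the subscript $w_{r}$ in~(\ref{11}) should read $W_{k}$; the projection is onto the column space of the block $W_{k}$, not onto a single vector.)

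For part~1, if $W_{k}=0$ then $Wl=W'l'$ does not depend on $t$, so every $l\in\mathcal{L}$ realizes $d(l,\mathcal{P})=\|W'l'\|=d(\mathcal{L},\mathcal{P})$, with the witnessing point $p=proj_{\mathcal{P}}(l)$; this is the literal analogue of case~1 of Proposition~\ref{1}.

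For part~2, once~(\ref{11}) is established I would solve it explicitly under the hypothesis $\operatorname{rank}(W_{k})=k$. Since $W_{k}$ has full column rank, $W_{k}^{T}W_{k}$ is invertible and $proj_{\langle W_{k}\rangle}=W_{k}(W_{k}^{T}W_{k})^{-1}W_{k}^{T}$; moreover a left inverse $W_{k,L}$ exists (e.g. $W_{k,L}=(W_{k}^{T}W_{k})^{-1}W_{k}^{T}$). Applying $W_{k,L}$ to both sides of~(\ref{11}) and using $W_{k,L}W_{k}=E_{k}$ yields $t_{pred}=-\,W_{k,L}\,proj_{\langle W_{k}\rangle}(W'l')=-\,W_{k,L}W_{k}(W_{k}^{T}W_{k})^{-1}W_{k}^{T}W'l'$, which is formula~(\ref{12}) (reading $v=l'$). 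I would also remark that when $\operatorname{rank}(W_{k})=k$ the vector $t_{pred}$ is unique, since $W_{k}t=W_{k}t'$ forces $t=t'$; when the rank is smaller, the solution set of~(\ref{11}) is a nonempty affine subspace, all of whose points are valid predictions.

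The only genuinely delicate point is the well-posedness of equation~(\ref{11}): one must observe that $-\,proj_{\langle W_{k}\rangle}(W'l')$ lies in $\langle W_{k}\rangle$ by construction, so a solution $t_{pred}$ always exists; the minimization argument then shows these are \emph{exactly} the minimizers. Everything else is a direct transcription of the rank-one case, so I expect no real obstacle beyond being careful that the projection is onto the full block $\langle W_{k}\rangle$ rather than onto a single column.
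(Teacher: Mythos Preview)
Your proposal is correct and follows essentially the same approach as the paper: reduce to minimizing $\|W_{k}t+W'l'\|$, apply the Pythagorean decomposition with respect to $\langle W_{k}\rangle$ to obtain $W_{k}t_{pred}=-\,proj_{\langle W_{k}\rangle}(W'l')$, handle the $W_{k}=0$ case by noting $Wl$ is then constant in $t$, and in the full-rank case apply a left inverse together with the projection formula $W_{k}(W_{k}^{T}W_{k})^{-1}W_{k}^{T}$ to get~(\ref{12}). Your observations that the subscript in~(\ref{11}) should read $W_{k}$ and that $v$ in~(\ref{12}) should be $l'$ are also correct.
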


\begin{proof} Let $l=\left[
  \begin{array}{c}
    t \\
    \hline
    l' \\
  \end{array}
\right]\in \mathcal{L}$. Again we have that $P^{T}P$ is invertible and
$proj_{\mathcal{P}}=P(P^{T}P)^{-1}P^{T}$, 
$d(l,\mathcal{P})=||proj_{\mathcal{P}}(l)-l||=||(P(P^{T}P)^{-1}P^{T}-E_{m})l||=|||Wl||$. Again we express the product $Wl$ as a linear combinations of columns:

\begin{equation}\label{106}
Wl=W_{k}t+W'l'=W_{k}t+proj_{\langle
W_{k}\rangle}(W'l')+proj_{\langle W_{k}\rangle^{\perp}}(W'l').
\end{equation}

Using the orthogonality we have

\begin{equation}\label{107}
||Wl||^2=||W_{k}t+proj_{\langle
W_{k}\rangle}(W'l')||^2+||proj_{\langle
W_{k}\rangle\rangle^{\perp}}(W'l')||^2.
\end{equation}
\begin{enumerate}

\item Let $W_{k}=0$. Then for any $l\in \mathcal{L}$ $d(l,\mathcal{P})=||W'l'||$, it does
not depend on $l$.

\item Let $W_{r}\neq 0$. $||Wl||$ is minimal if and only if $$proj_{\langle W_{k}\rangle}(Wl)=W_{k}t+proj_{\langle
W_{k}\rangle}(W'l')=0.$$

$t$ can be chosen such that $proj_{\langle W_{k}\rangle}(Wl)=0$
since $proj_{\langle W_{k}\rangle}(W'l')$ is an element generated by
the columns of $W_{k}$ and it can be expressed in the form $W_{k}t$.
If $rank(W_{k})=k$ then $proj_{\langle W_{k}
\rangle}=W_{k}(W^{T}_{k}W_{k})^{-1}W^{T}_{k}$ and $W_{k,L}$ - the
left inverse of $W_{k}$, exists, thus $W_{k}t=-proj_{\langle
W_{k}\rangle}(W'l')$ implies $t$ is given by (\ref{12}).

\end{enumerate}

\end{proof}

\begin{rem} The condition 

\begin{equation}\label{107}
W_{k}t=-proj_{\langle W_{k}\rangle}(W'l')
\end{equation}

can be interpreted that $t$ is the coordinate column of
$-proj_{\langle W_{k}\rangle}(W'l')$ with respect to the sequence of
$W_{k}$-columns, a generating set of $\langle W_{k}\rangle$.

\end{rem}

\begin{prop}\label{8} Let $p_{1},...,p_{n}$ be elements in $\mathbb{R}^{m}$, $P=[p_{1}|...|p_{n}]$. Let $PC=\left[
                                                       \begin{array}{c|c}
                                                        P_{e}  & O_{m,n-r} \\
                                                       \end{array}
                                                     \right]$
be such that $rank(P_{e})=rank(P)$ (for example, a column echelon
form of $P$), $C$ is a $n\times n$ matrix of elementary column
operations.

Denote 

\begin{equation}\label{102}
W=P_{e}(P_{e}^{T}P_{e})^{-1}P_{e}^{T}-E_{m}=[W_{k}|W'],
\end{equation}

where $W_{k}$ is the block of the first $k$ columns of $W$. In these
notations the statements of Proposition \ref{6} are true.

\end{prop}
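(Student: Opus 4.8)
The plan is to reduce Proposition~\ref{8} to Proposition~\ref{6} by replacing the (possibly linearly dependent) generating system $p_{1},\dots,p_{n}$ of $\mathcal{P}$ with the columns of $P_{e}$, which are linearly independent and span the same subspace. First I would verify that the columns of $P_{e}$ really do form a basis of $\mathcal{P}$: since $C$ is a product of elementary column operations it is invertible, so $\langle PC\rangle=\langle P\rangle=\mathcal{P}$; adjoining the zero block $O_{m,n-r}$ does not change the column space, hence $\langle P_{e}\rangle=\mathcal{P}$; and because $rank(P_{e})=rank(P)=r$ equals the number of columns of $P_{e}$, these columns are linearly independent.

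Next, since $P_{e}$ has full column rank, $P_{e}^{T}P_{e}$ is invertible and the standard formula for the orthogonal projection onto a column space applies (Meyer, \cite{Mey}): $proj_{\mathcal{P}}=proj_{\langle P_{e}\rangle}=P_{e}(P_{e}^{T}P_{e})^{-1}P_{e}^{T}$. Consequently the matrix $W=P_{e}(P_{e}^{T}P_{e})^{-1}P_{e}^{T}-E_{m}$ of~(\ref{102}) is exactly $proj_{\mathcal{P}}-E_{m}$, so it plays precisely the role for $P_{e}$ that the matrix of~(\ref{10}) plays for a linearly independent system in Proposition~\ref{6}. In particular $d(l,\mathcal{P})=||proj_{\mathcal{P}}(l)-l||=||Wl||$ for every $l\in\mathbb{R}^{m}$, and the block splitting $W=[W_{k}|W']$ associated with the coordinate decomposition defining $\mathcal{L}$ is identical to the one used there.

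Finally I would repeat the proof of Proposition~\ref{6} verbatim with $P_{e}$ in place of $P$ and $r$ in place of $n$: for $l=\left[\begin{array}{c}t\\\hline l'\end{array}\right]\in\mathcal{L}$ write $Wl=W_{k}t+W'l'$, decompose $W'l'$ into its components in $\langle W_{k}\rangle$ and in $\langle W_{k}\rangle^{\perp}$, apply the generalized Pythagorean theorem, and conclude that $||Wl||$ is minimal exactly when $W_{k}t=-proj_{\langle W_{k}\rangle}(W'l')$ — an equation that is always solvable (the right-hand side lies in $\langle W_{k}\rangle$) and, when $rank(W_{k})=k$, gives the closed form~(\ref{12}) via the left inverse of $W_{k}$. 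I do not anticipate a real obstacle; the only point deserving a line of care is the first one — that applying the invertible column transformation $C$ and then discarding the resulting zero columns neither enlarges nor shrinks the span of $P$ and leaves a linearly independent set — after which Proposition~\ref{8} follows by a direct transfer of Proposition~\ref{6}.
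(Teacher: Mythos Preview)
Your proposal is correct and matches the paper's own argument: the paper simply refers back to the proof of Proposition~\ref{2}, which observes that the columns of $P_{e}$ form a basis of $\mathcal{P}$, identifies $proj_{\mathcal{P}}=P_{e}(P_{e}^{T}P_{e})^{-1}P_{e}^{T}$, and says to repeat the linearly independent case with $P_{e}$ in place of $P$. Your write-up is a more detailed version of exactly this reduction.
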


\begin{proof} See proof of Proposition \ref{2}.

\end{proof}

%
%

The $\mathcal{L}$-subset with minimal distance to $\mathcal{P}$ can
also be found similarly to Proposition \ref{4}. Consider the
nonnegative quadratic surface in independent variables
$t_{1},...,t_{k}$ and the dependent variable $d$:
$||proj_{\mathcal{P}}(l)-l||^2=d$, find its coefficients using
scalar products or $\cfrac{(k+1)(k+2)}{2}$ points in general
position, use theory of quadratic forms or find partial derivatives
and solve the corresponding linear system. Details are given in the
proposition below. For other metrics the solution must be modified
accordingly.

\begin{prop}\label{3} Let all notations be as in Proposition \ref{6}. Let $w_{j}$ be the jth column of $W$.
Then $t_{pred}\in \mathbb{R}^{k}$ in the case 2. of Proposition \ref{6} is a
solution of the $k\times k$ linear system
\begin{equation}\label{7}
At=b,\text{where}\ [A]_{ij}=w^{T}_{i}w_{j},\ b_{i}=-w^{T}_{i}W'l'
\end{equation}

\end{prop}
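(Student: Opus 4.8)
The plan is to recognise the linear system $At=b$ of~(\ref{7}) as the normal equations attached to the very minimisation already solved in Proposition~\ref{6}. First I would note that, writing $w_{1},\dots,w_{k}$ for the columns of $W_{k}$, one has $[W_{k}^{T}W_{k}]_{ij}=w_{i}^{T}w_{j}$ and $[W_{k}^{T}W'l']_{i}=w_{i}^{T}W'l'$ for $i,j\in\{1,\dots,k\}$, so that $A=W_{k}^{T}W_{k}$ and $b=-W_{k}^{T}W'l'$; thus the assertion to be proved is simply that $t_{pred}$ solves $W_{k}^{T}W_{k}\,t=-W_{k}^{T}W'l'$.

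For the direction from Proposition~\ref{6} to~(\ref{7}) I would left-multiply the characterising identity~(\ref{11}), namely $W_{k}t_{pred}=-proj_{\langle W_{k}\rangle}(W'l')$, by $W_{k}^{T}$, using the elementary fact that $W_{k}^{T}\,proj_{\langle W_{k}\rangle}(x)=W_{k}^{T}x$ for every $x\in\mathbb{R}^{m}$ (because $x-proj_{\langle W_{k}\rangle}(x)$ lies in $\langle W_{k}\rangle^{\perp}=\ker W_{k}^{T}$); this yields $W_{k}^{T}W_{k}\,t_{pred}=-W_{k}^{T}W'l'$ at once. For the converse — so that conversely every solution of~(\ref{7}) is a legitimate $t_{pred}$ — I would use that $proj_{\langle W_{k}\rangle}(W'l')$ is the \emph{only} vector $v\in\langle W_{k}\rangle$ with $W_{k}^{T}v=W_{k}^{T}W'l'$, uniqueness holding since the difference of two such vectors would lie in $\langle W_{k}\rangle\cap\langle W_{k}\rangle^{\perp}=\{0\}$; hence any $t$ with $At=b$ satisfies $-W_{k}t\in\langle W_{k}\rangle$ and $W_{k}^{T}(-W_{k}t)=W_{k}^{T}W'l'$, forcing $W_{k}t=-proj_{\langle W_{k}\rangle}(W'l')$, i.e.\ the Proposition~\ref{6} condition.

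A self-contained alternative, which I find cleaner, bypasses Proposition~\ref{6} altogether: starting from $Wl=W_{k}t+W'l'$ as in~(\ref{106}), one expands $||Wl||^{2}=t^{T}W_{k}^{T}W_{k}\,t+2\,t^{T}W_{k}^{T}W'l'+||W'l'||^{2}$, a convex quadratic in $t$ (a sum of squares of affine functions of $t$), whose minimisers therefore coincide with its critical points; equating the gradient to zero gives $2W_{k}^{T}W_{k}\,t+2W_{k}^{T}W'l'=0$, that is $At=b$, and a minimiser is known to exist. The computation is routine; the one point needing care is that the solution set of~(\ref{7}) must coincide with the set of distance-minimising $t$ even when $W_{k}$ has deficient column rank, and this is settled precisely by the uniqueness-of-projection remark (equivalently, by convexity of the above sum-of-squares quadratic), so I foresee no real obstacle.
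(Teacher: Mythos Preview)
Your proposal is correct. Your second, ``self-contained'' approach is exactly the paper's own proof: the paper writes $Wl=\sum_{i=1}^{k}t_{i}w_{i}+W'l'$, expands $||Wl||^{2}$ as the quadratic $\sum_{i,j}t_{i}t_{j}w_{i}^{T}w_{j}+2\sum_{i}t_{i}w_{i}^{T}W'l'+||W'l'||^{2}$, and sets each partial derivative $\partial||Wl||^{2}/\partial t_{i}$ to zero to obtain~(\ref{7}); you do the same computation in matrix form.

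Your first approach, however, is a genuinely different route. Rather than re-deriving the minimisation from scratch, you read~(\ref{7}) as the normal equations $W_{k}^{T}W_{k}t=-W_{k}^{T}W'l'$ and obtain them by left-multiplying the already-established identity~(\ref{11}) by $W_{k}^{T}$, invoking only the fact that $x-proj_{\langle W_{k}\rangle}(x)\in\ker W_{k}^{T}$. This has the virtue of making explicit the logical dependence of Proposition~\ref{3} on Proposition~\ref{6} and of handling the rank-deficient case cleanly via the uniqueness-of-projection argument, without any calculus. The paper's approach, by contrast, is self-contained and more elementary in flavour (partial derivatives of an explicit quadratic), but it leaves the equivalence of ``critical point'' and ``minimiser'' in the degenerate case implicit; your convexity/sum-of-squares remark fills that small gap.
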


\begin{proof} Let $l=\left[
  \begin{array}{c}
    t \\
    \hline
    l' \\
  \end{array}
\right]\in \mathcal{L}$, $t= \left[
  \begin{array}{c}
    t_{1} \\
    ... \\
    t_{k} \\
  \end{array}
\right] \in \mathbb{R}^{k}$. Again we interpret the product $Wl$  a linear combination of $W$-columns:

\begin{equation}\label{108}
Wl=\sum\limits_{i=1}^{k}t_{i}w_{i}+W'l'. 
\end{equation}
Using othogonality we get $||Wl||^2=$
$\sum\limits_{i,j}^{k}t_{i}t_{j}w^{T}_{i}w_{j}+2\sum\limits_{i=1}^{k}t_{i}w^{T}_{i}W'l'+||W'l'||^2$.
We have that 

\begin{equation}\label{101}
\cfrac{\partial ||Wl||^2}{\partial
t_{i}}=2\sum\limits_{j=1}^{k}t_{j} w^{T}_{i}w_{j}+2w^{T}_{i}W'l'.
\end{equation}

Equating it to $0$ for each $i$ we get the $k\times k$ linear
system of equations given in the statement.
\end{proof}

\paragraph{Removing outliers (the most influential points) - an extension of the Cook's distance idea to the PCA setting.}\label{1}

A desirable step in the process of finding hidden data features is
detection of outliers, (Zimek and Schubert, \cite{ZS}). Outliers
can be removed using the ideas of the Cook's distance,
(Cook, \cite{C}),(Kim, \cite{K}). The idea is for each data point $x$ to
compare projections of data points onto two principal component
hyperplanes - the PC hyperplane constructed with the whole data
set $\mathcal{S}$ and the PC hyperplane of the same dimension
constructed with the data set $\mathcal{S}\backslash x$. If the difference between these projections is relatively large, then $x$ is considered an outlier (or an unduly influential point) with respect to the construction of the PCA hyperplane. It is
related to the leave-one-out cross-validation.

In general, if we are given the coordinate column of a data point
$y$ and two projection matrices $H$ and $H'$ then $Hy-H'y=(H-H')y$ or its
norm represents the difference of the two projections. To estimate
the absolute difference between $H$ and $H'$ on the whole data
set, the sum over all data points of norms (squared) of such
projection differences $\sum\limits_{y\in
\mathcal{S}}||(H-H')y||^2$ is computed. For the relative difference
we divide it by the sum of norms of distances from data points to
their projections onto a chosen hyperplane, say,
$\sum\limits_{y\in \mathcal{S}}||(H-E_{m})y||^2$.

We explain it in more detail for the PCA setting. We use the
notations of Section \ref{14}. Let $P=[p_{1}|...|p_{n}]$ be the
$m\times n$ matrix where $p_{j}$ is the $j$th principal component
for the data matrix $S$. Let $P_{i}=[p_{i1}|...|p_{in}]$ be the
$m\times n$ matrix where $p_{ij}$ is the $j$th
 principal component for the data matrix $S\backslash S_{i*}$ ($i$th row
 removed). Construct the projection matrices
 $H=P(P^{T}P)^{-1}P^{T}$,
 $H_{i}=P_{i}(P_{i}^{T}P_{i})^{-1}P_{i}^{T}$, note that $H,H_{i}$ are $m\times m$
 matrices. Columns of $S^{T}=[x_{1}|...|x_{m}]$ are vectors of data points, columns of $(H-H_{i})S^{T}$ are differences of
 projections of data points. We use the Frobenius norm of matrices. We have that 
 
 \begin{equation}\label{100}
 \sum\limits_{x\in \mathcal{S}}||(H-H_{i})x||^2=||(H-H_{i})S^{T}||^2. 
 \end{equation}
 
 We can assume that 
 
 \begin{equation}\label{109}
 C_{i}=||(H-H_{i})S^{T}|| 
\end{equation} 
 measures the
 total (absolute) influence of the removal of the $i$th data point.
 Columns of $(H-E_{m})S^{T}$ are vectors orthogonal to the PC
 hyperplane having data points and their projection as endpoints, $||(H-E_{m})S^{T}||^2$
 is total sum of distance squares from data points to their
 projections onto the initial PC hyperplane.
 $RC_{i}=\cfrac{||(H-H_{i})S^{T}||}{||(H-E_{m})S^{T}||}$ can be chosen as
 relative influence or outlying measure of the data point $i$.

Clusters of  data points with large $RC$-value can be removed
after computing all $RC_{i}$ or iteratively.

\paragraph{Cross-validation issues.}

Cross-validation of the model can be done estimating in-sample and
out-of-sample mean square error (MSE) of predictions.
Leave-$p$-out and $k$-fold cross-validation can be used to split
the whole data set into the training and test subsets.

We explain in some detail the leave-$1$-out cross-validation for our method. We use the notations of the previous section. For each $i,1\le i\le |\mathcal{S}|$ we construct $P_{i}$ as described above (the projection to the PCA hyperplane of dimension $n$ which is computed removing the $i$-th data point), define $W_{i}=P_{i}(P^{T}_{i}P_{i})^{-1}P_{i}-E_{m}$. We proceed according to Proposition \ref{1}, use the formula (\ref{88}) and get the prediction $t_{pred}=y'_{i}$ for the $i$'th data point using the other data points as a training set. We can now compare vectors $[y_{1},...,y_{s}]$ and $[y'_{1},...,y'_{s}]$, find MSE etc.

\paragraph{Confidence interval estimation.}

Confidence intervals for this method can be estimated using the natural jacknife or bootstrap methods. Leave-$p$-out jacknife method can be used to generate a prediction distribution for a given initial data vector. Sufficient number of leave-$p$-out subsets of full data points and corresponding PCA hyperplanes are generated, predictions are computed for the given incomplete data vector. Another way to generate a prediction distribution is using bootstrapping by randomly choosing with replacement sufficiently large subsets of data points. Additionally, sufficient number of fictitious incomplete data vectors with a given distribution can be generated to estimate confidence intervals with a fixed full data matrix $S$.

\section{Implementation of the PCA-distance method}

\paragraph{Scaling.}

The data matrix $S$ can be scaled columnwise - each $s_{ij}\in S$ is
substituted by $\cfrac{s_{ij}-\overline{s_{*j}}}{\sigma_{j}}$ where
$\overline{s_{*j}}$ and $\sigma_{j}$ are the mean and the standard
deviation, respectively, of the $j$th column. If $\sigma_{j}=0$ then
the function $s_{ij}\mapsto s_{ij}-\overline{s_{*j}}$ is applied.
After the prediction is found the inverse transformation is
computed. We usually assume that the data is
scaled.

\subsection{An algorithm}

We describe the main steps for an algorithm implementing the
PCA-prediction method developed in Propositions \ref{1},\ref{2},
\ref{6}, \ref{8}, \ref{3}. In particular, $p_{i}\in \mathbb{R}^{m}$.
Notations of these propositions are used. See also Fig.3.

\begin{enumerate}

\item[Step 1] Identify vectors $p_{1},...,p_{n}$, form the $m\times n$ matrix
$P=[p_{1}|...|p_{n}]$. Go to Step 2.

\bigskip

\item[Step 2] Determine $rank(P)$. If $rank(P)=n$ (i.e. $p_{1},...,p_{n}$ are linearly
independent) then go to Step 3.1, otherwise go to Step 3.2.

\bigskip

\item[Step 3.1] Compute the $m\times m$  matrix
$W=P(P^{T}P)^{-1}P^{T}-E_{m}$. Go to Step 4.

\bigskip

\item[Step 3.2] Find a column-echelon form of $P$ - $[P_{e}|O_{m,n-r}]$.
Compute $W=P_{e}(P_{e}^{T}P_{e})^{-1}P_{e}^{T}-E_{M}$. Go to Step 4.
\bigskip

\item[Step 4] Identify the subspace $\mathcal{L}$. If $\dim(\mathcal{L})=1$
then go to Step 5.1, otherwise go to Step 5.2.
\bigskip
\item[Step 5.1] Subdivide $W=[w_{1}|W']$. Identify $l'$. Compute
$t_{pred}=-\cfrac{1}{||w_{1}||^2}w^{T}_{1}W'l'$. Go to Step 6.
\bigskip
\item[Step 5.2] Subdivide $W=[W_{k}|W']=[w_{1}|...|w_{k}|W']$. Identify $l'$. Compute the matrix elements
$a_{ij}=w^{T}_{i}w_{j}$, $b_{i}=-w^{T}_{i}W'l'$. Solve the linear
system (\ref{7}). If there are free unknowns then use additional
arguments to find a unique prediction $t_{pred}= \left[
  \begin{array}{c}
    t_{1} \\
    ... \\
    t_{k} \\
  \end{array}
\right] \in \mathbb{R}^{k}$. Go to Step 6.
\bigskip
\item[Step 6] Complete the
algorithm by returning $l_{pred}=\left[
  \begin{array}{c}
    t_{pred} \\
    \hline
    l' \\
  \end{array}
\right]$.

\end{enumerate}

\begin{center}

\includegraphics[width=100mm]{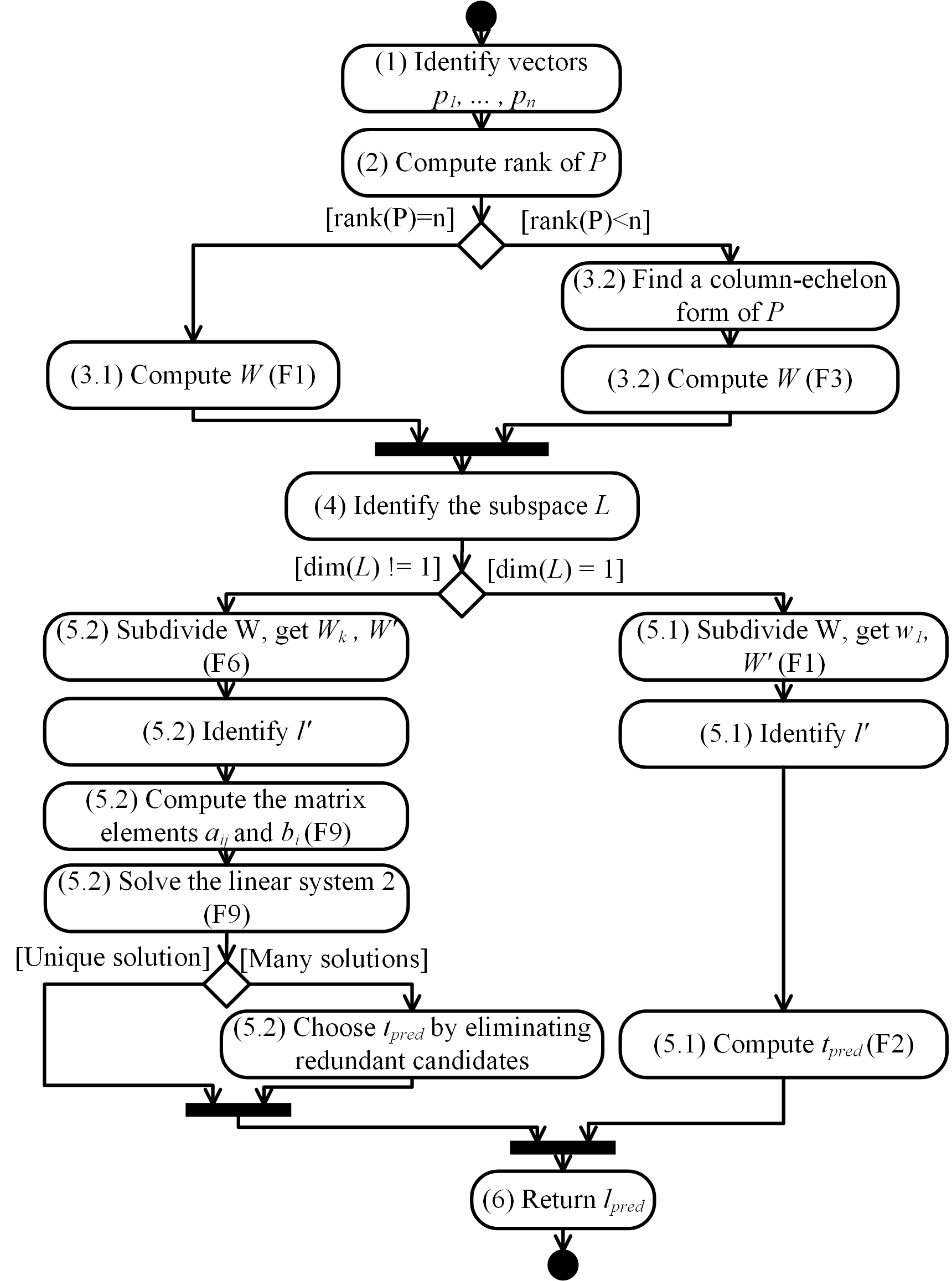}
 
Fig.3.  The activity diagram of the PCA-distance algorithm
\end{center}

\subsection{Implementation example}

The PCA-distance method have been implemented as a part of the research project dealing with predictions of drug-resistant pathogen strains for medical and pharmacological purposes. In our case the independent variables $x_{ij}$ are certain socio-economic indicators and $y_{i}$'s measure antimicrobial resistance of pathogens, see Acknowledgements.  Fig.4 shows one of the outcomes of this implementation - the world map coloured according to the PCA-distance predictions of the antimicrobial resistance. Typical size of data matrices (the matrix $S$) was about $300\times 7000$. Top $5\%$ of outliers were removed using the approach given in subsection \ref{1}. 

\begin{center}
\includegraphics[width=100mm]{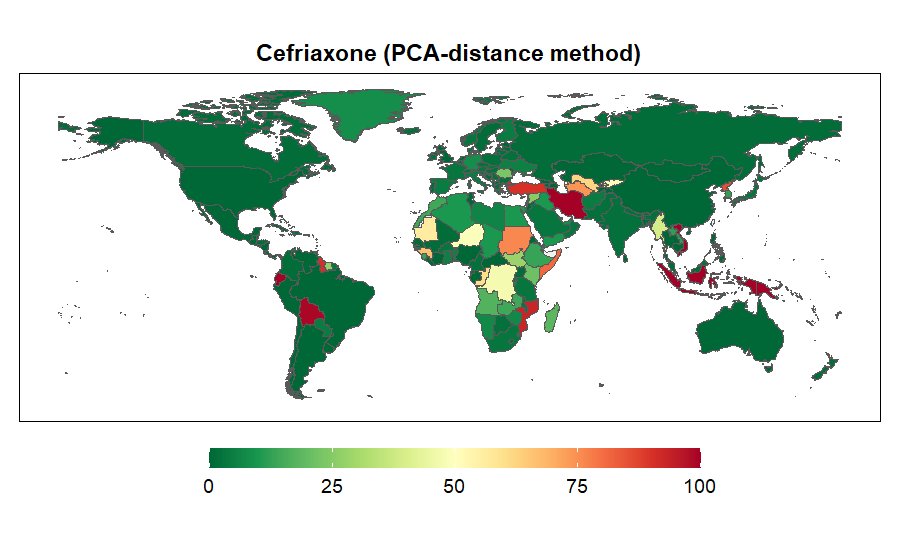}

Fig.4. Predictions of ceftriaxone-resistant pathogen percentage rates by the PCA-distance method

\end{center}

This approach has been compared with the one used in (Oldenkamp, Schultsz et al, \cite{OSMC}) - a betabinomial vector generalized linear model with a logit link function. For the considered cases prediction errors of this method are close to those obtained by the Oldenkamp-Schultsz-Mancini-Cappuccio method, see Fig.5. It must be noted that confidence intervals provided by the OSMC method contain predictions by the PCA-method in $20\%-30\%$ cases.

\medskip

\begin{center}
\includegraphics[width=100mm]{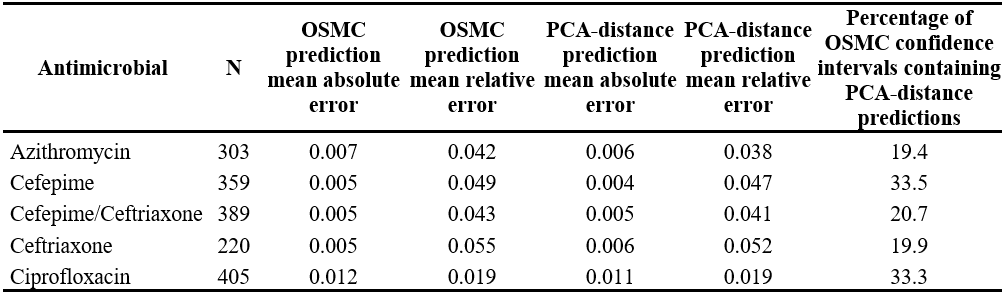}

\bigskip

Fig.5. Comparing OSMC and PCA-distance methods
\end{center}

\section{Discussion}

\paragraph{Machine learning features.}

The described prediction method may be interpreted as a technique
having features of unsupervised and semi-supervised machine
learning. The PCA-based dimensionality reduction which results in
the approximation of the initial data set by a low-dimensional
hyperplane, is a case of an unsupervised representation (feature)
learning which identifies the most important data indicators for
prediction purposes (Bengio, Courville et al., \cite{BCV}). Unsupervised PCA-based anomaly detection is
used to find outliers of the data set. In case the data matrix has
undefined elements, it can be filled by appropriate methods which
can be interpreted as cases of semisupervised learning.

Our prediction method is subject to typical machine learning
limitations and failures being caused by badly chosen assumptions,
conjectures, by data overfitting or underfitting.

\paragraph{Comparison of the PCA-distance method with other
prediction methods.}

The PCA-distance method seems to be more advanced and sensitive
compared to the naive mean value methods which do not use
linearization. One possible advantage of the mean value method is
that it always returns values within the interval specified by
existing measurements, e.g. it can not return a negative value if
all existing values are positive.

The PCA-distance method does not assume that data is distributed in
a special and uniform way therefore it seems more suitable to
process data having high dimensionality and data distributed in
different ways. Thus is it markedly different from methods in the
expectation-maximization family, (van Buuren, \cite{B}).

The PCA-distance method takes into account the whole data set of
complete samples. In this sense it is different form the K-NN
methods which take into account only a few complete samples,
(Bertsimas, Pawlowski et al., \cite{BPZ}). In our method there is no
need to arbitrarily specify the integer K. Taking into account the
linearization of the whole data set seems more justified for large
data sets with high dimensionality.  We note that the metric idea is
used in the space of all variables, including the dependent variables, not just the subspace of
variables which are defined for all samples.

Existing prediction methods using PCA seem to recover missing values
as coordinates of points on the subspace of principal components.
The PCA-distance method is different from such methods since it
finds extremal points on the candidate subspace with respect to the shifted subspace of principal components.

\section{Conclusion}

We offer a novel method for prediction of missing data which uses
only the linearization - the most important approximation idea, and
the notion of metric. The data set of full samples is linearized
using the PCA. The point or points on the candidate subspace having
the minimal distance to the linearized data subspace is chosen as
the prediction.   Closed formulas are obtained for the Euclidean
(canonical or generalized) metric case. Our method may be suitable
for data sets having high dimensionality and different data
distribution patterns for different variables since it does not explicitly assume any data distribution.






\section*{Acknowledgements}

The authors acknowledge partial funding from the following national
funding agencies participating in the project  MAGIcIAN JPI-AMR
(https://www.magician-amr.eu/): State Education Development Agency
(VIAA, Latvia) and Italian Ministry of Education and Research (MIUR,
Italy).




%

\received{January 19, 2021}{February 11, 2022}{February 16, 2022}
\end{document}